\newtheorem{theorem}{Theorem}[section]
\newtheorem{lemma}[theorem]{Lemma}
\newtheorem{proposition}[theorem]{Proposition}
\newtheorem{problem}[theorem]{Problem}
\newcommand{\norm}[1]{\left\lVert#1\right\rVert}
\newcommand{\boldparagraph}[1]{\vspace{0.5em}\noindent{\bf #1}}
\newcommand{\cross}[1][1pt]{\ooalign{%
  \rule[1ex]{1ex}{#1}\cr
  \hss\rule{#1}{.7em}\hss\cr}}%
\title{\LARGE \bf
Ternary-Type Opacity and Hybrid Odometry for RGB NeRF-SLAM
}
\author{Junru Lin$^{1,2,3}$ \quad Asen Nachkov$^{1}$ \quad Songyou Peng$^{3}$ \quad Luc Van Gool$^{1,3}$ \quad Danda Pani Paudel$^{1,3}$
\thanks{$^{1}${INSAIT, Sofia University, Bulgaria}}%
\thanks{$^{2}${University of Toronto, Canada}}%
\thanks{$^{3}${ETH Zurich, Switzerland}}%
}
\begin{document}


\maketitle
\thispagestyle{empty}
\pagestyle{empty}

\begin{abstract}

In this work, we address the challenge of deploying Neural Radiance Field (NeRFs) in Simultaneous Localization and Mapping (SLAM) under the condition of lacking depth information, relying solely on RGB inputs. The key to unlocking the full potential of NeRF in such a challenging context lies in the integration of real-world priors. A crucial prior we explore is the binary opacity prior of 3D space with opaque objects. To effectively incorporate this prior into the NeRF framework, we introduce a ternary-type opacity (TT) model instead, which categorizes points on a ray intersecting a surface into three regions: before, on, and behind the surface. This enables a more accurate rendering of depth, subsequently improving the performance of image warping techniques. Therefore, we further propose a novel hybrid odometry (HO) scheme that merges bundle adjustment and warping-based localization. Our integrated approach of TT and HO achieves state-of-the-art performance on synthetic and real-world datasets, in terms of both speed and accuracy. This breakthrough underscores the potential of NeRF-SLAM in navigating complex environments with high fidelity.

\end{abstract}    
\section{Introduction}
\label{sec:intro}

The advent of Neural Radiance Fields (NeRFs) \cite{mildenhall2021nerf} has marked a paradigm shift in 3D scene reconstruction methodologies. As a result, several NeRF-based Simultaneous Localization and Mapping (SLAM) frameworks, referred to as NeRF-SLAM, have been developed recently~\cite{sucar2021imap, zhu2022nice, sandstrom2023point, wang2023bad, zhu2023nicer,rosinol2022nerf,li2023dense,zhang2023hi,zhang2023go}. Although existing research has significantly advanced NeRF-SLAM with RGB-D inputs, the necessity for RGB-D sensors limits their widespread adoption. Consequently, there is growing interest in RGB-only NeRF-SLAM. However, realizing the full potential of RGB NeRF-SLAM faces numerous challenges.

\begin{figure}[t]
    \centering
    \includegraphics[width=\linewidth]{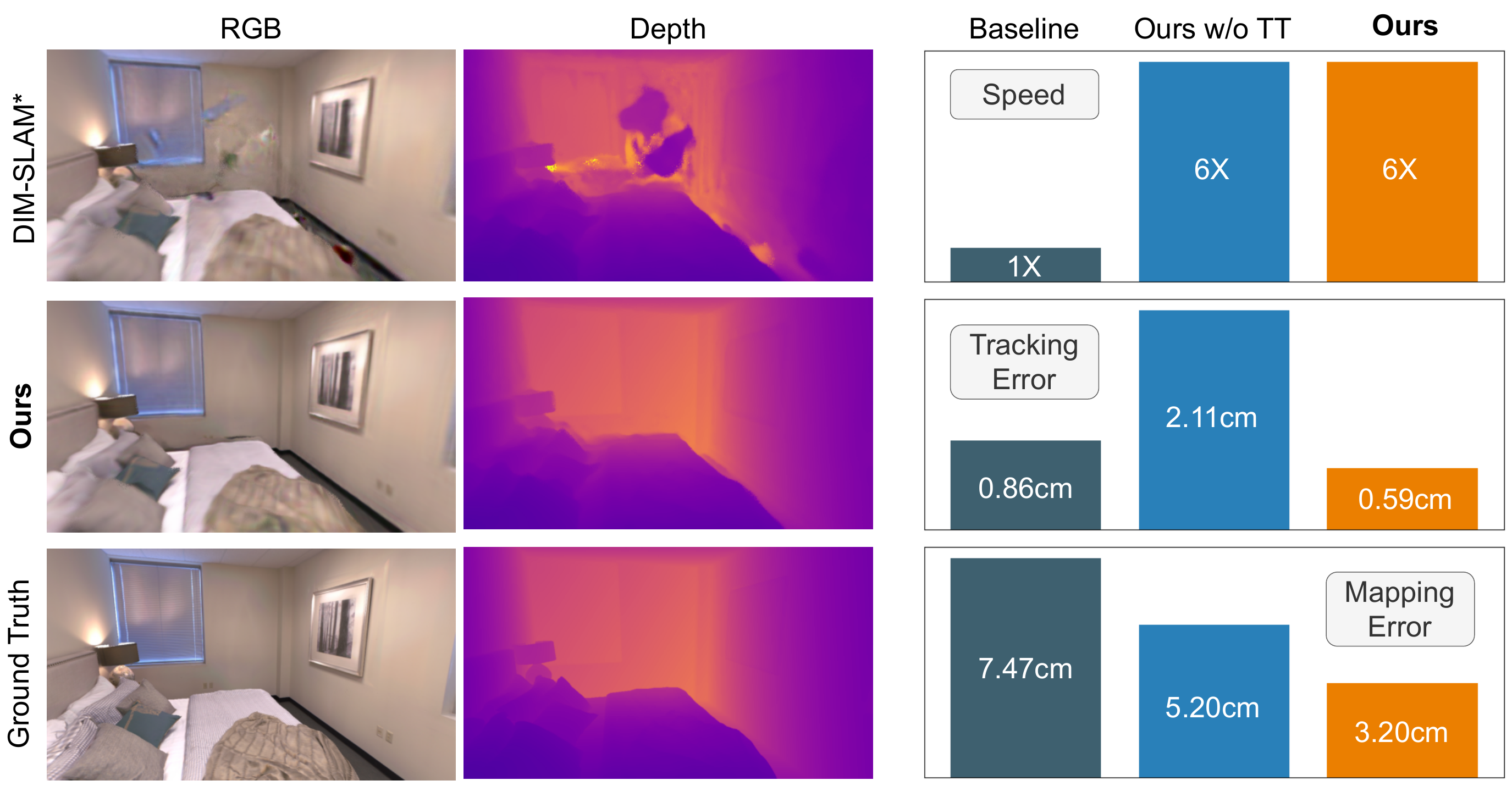}
    \caption{\textbf{Qualitative and Quantitative Results}. On the left, we show the rendered RGB and depth from a random pose after training the whole sequence of Replica~\cite{straub2019replica} \texttt{room-1}. On the right, we show the speed, tracking error, and mapping error on Replica \texttt{office-0}. DIM-SLAM$^*$ refers to our re-implementation for DIM-SLAM~\cite{li2023dense}.}
    \label{fig:teaser}
\end{figure}

Three noteworthy attempts for RGB-only NeRF-SLAM have been made in the literature: NeRF-SLAM~\cite{rosinol2022nerf}, DIM-SLAM~\cite{li2023dense}, and NICER-SLAM~\cite{zhu2023nicer}. Among them, NeRF-SLAM uses both pretrained (or separately optimized)  poses and depth networks, and NICER-SLAM uses pretrained depth networks, making the underlying problems associated with the task at hand obscure. Our focus is on the core issues within RGB NeRF-SLAM. DIM-SLAM avoids pretraining and shows promise. Unfortunately though, it suffers from instability and high computational requirements. We aim to refine DIM-SLAM's approach, and we identify that the speed and performance can be improved by resolving two major factors: (i) the failure to satisfy the binary opacity prior, and (ii) frame-wise volumetric rendering (VR).

\begin{figure*}[!t]
    \centering
    \includegraphics[width=\linewidth]{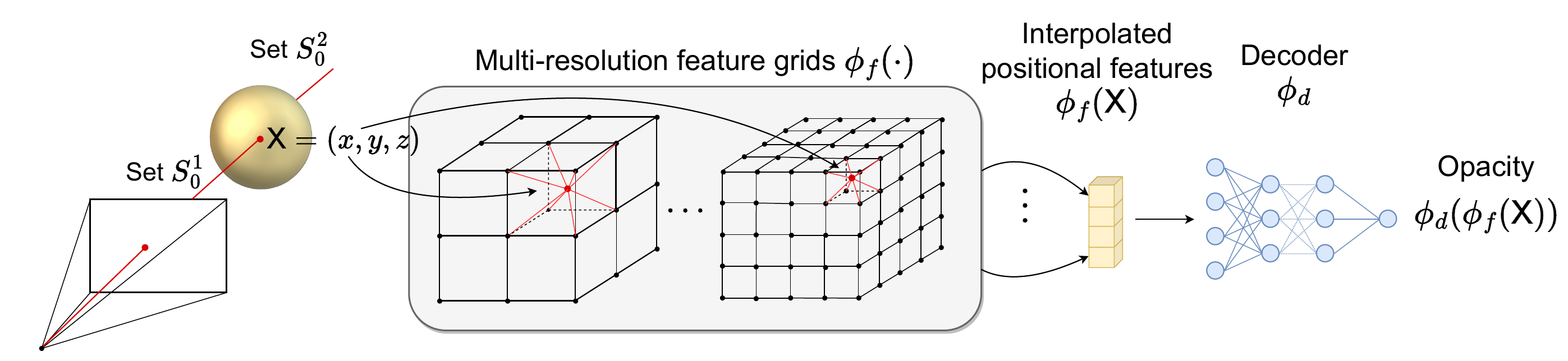}
    \caption{\textbf{Inferring the Opacity of a 3D Point}. We utilize a set of multi-resolution feature grids which we interpolate at the desired 3D point. The collected features are passed to a neural network to predict the color and the opacity. Only the opacity is shown for clarity.}
    \label{fig:schematic}
\end{figure*}

On the one hand, we aim to improve the scene representation by addressing the lack of observed binary opacity, which is expected in rigid 3D scenes with opaque surfaces. Unlike the RGB-D case, where rendered opacity is directly supervised using depths, the RGB-only case exhibits a widely spread distribution with almost no high opacity values, as illustrated in Fig.~\ref{fig:ternary_opacity} (blue part), which is undesirable. This motivates us to introduce a binary opacity prior into the RGB NeRF-SLAM pipeline. However, our analysis of the VR function reveals that integrating this binary opacity prior is challenging and may not be necessary. Instead, a ternary-type opacity (TT) is better suited for meaningful optimization. Intuitively, during VR, any point behind another opaque point does not contribute to the rendered image and can have any opacity and radiance. Their spread, however, turns out to be hindering during the binary opacity prior integration process. Therefore, we adopt a ternary-type model as illustrated in Fig.~\ref{fig:ternary_opacity} (orange part), which leads to the desired opacity and weight distribution along a ray (Fig.~\ref{fig:along_ray}) and thus improve performance.

On the other hand, we enhance camera tracking by avoiding the frame-wise VR. Instead, we propose to use a hybrid odometry (HO) technique that relies on image warping for coarse camera localization. The finer adjustments of the camera odometry are performed during the bundle adjustment (BA) step, which is conducted jointly during the mapping process. The proposed HO technique improves the speed up to an order of magnitude. This speed results from the fact that the pose initialization required for BA can be obtained simply by minimizing the photometric error between the current image and the warped previous images using the rendered depths (of the so-far reconstructed implicit map), similar to the existing works~\cite{engel2015large, engel2017direct}.

To summarize, our contributions are: (1) the introduction of a method to effectively leverage binary opacity prior for 3D surfaces through ternary-type modeling within the RGB-only NeRF-SLAM framework; (2) detailed theoretical insights into the impact of VR on mapping opaque 3D surfaces by distinguishing relevant from irrelevant components; (3) the proposal of HO and demonstration that the proposed TT prior and HO complement each other, achieving state-of-the-art results on benchmark datasets in terms of both speed and accuracy; and (4) a call to fully realize the potential of RGB-only NeRF-SLAM in future work.

\section{Related Work}
\label{sec:related_work}

\boldparagraph{Visual SLAM.} Simultaneous localization and mapping from 2D images is referred to as Visual SLAM \cite{fernandez2012simultaneous}. Feature-based algorithms estimate and track sparse or semi-dense keypoints across the image sequence \cite{engel2014lsd}, while direct methods estimate camera poses by minimizing a photometric error on the reconstruction \cite{macario2022comprehensive, engel2017direct}. The common stages include initialization, state prediction, tracking, and correction \cite{davison2007monoslam, vincke2010design, vincke2012efficient}. Different threads may be utilized for mapping and tracking\cite{newcombe2011dtam, klein2007parallel}, and loop closure detection can also be incorporated \cite{mur2015orb, mur2017orb}.

 \boldparagraph{Neural Implicit Representations.} Recently, NeRFs have emerged as an efficient method for high-quality scene mapping using neural networks \cite{mildenhall2021nerf}. Various improvements have been made to reduce the requirements of accurate and/or known camera poses \cite{wang2021nerf, lin2021barf, truong2023sparf}, to improve training speed \cite{muller2022instant}, and to increase the scale of the scene \cite{tancik2022block} and the rendering quality \cite{reiser2024binary, uy2023nerf}. 
 Among all related works, \cite{reiser2024binary} is the one that most relates to our method in binary opacity modelling. It encourages the density to converge towards surfaces using a discrete opacity grid representation and employs binary entropy loss for density. Our work incorporate the binary opacity in a more continuous manner, without discretizing the density values.

\boldparagraph{SLAM with Neural Implicit Representations.} 
Initial research on integrating NeRFs with traditional SLAM systems uses depth data for training \cite{johari2023eslam, lisus2023towards, zhang2023go}. iMAP \cite{sucar2021imap} uses a multilayer perceptron (MLP) to represent scene geometry and employs a keyframe graph to track frames, while NICE-SLAM \cite{zhu2022nice} and Point-SLAM \cite{sandstrom2023point} capture complex scenes and improve training efficiency using multiple MLPs and a sparse set of neural points, respectively. 

In contrast, methods like \cite{zhu2023nicer, rosinol2022nerf, li2023dense, chung2023orbeez} do not rely on explicit depth data. NeRF-SLAM uses a pretrained Droid-SLAM \cite{teed2021droid} to get estimated poses and depth directly for the tracking frontend. NICER-SLAM \cite{zhu2023nicer} uses a pretrained depth predictor and incorporates geometric cues based on RGB warping, optical flow, and surface normals. Differently, DIM-SLAM \cite{li2023dense} does not require any pretraining and relies on a photometric loss for geometry consistency. However, due to the per-frame BA, DIM-SLAM suffers from slow training speed. Our work advances DIM-SLAM, enhancing both the speed and performance of the SLAM system.
\section{Background}

In our SLAM system, the scene is represented using two neural functions for the implicit representation of opacity and the radiance of the scene, namely $\phi_{o}(\cdot )$ and $\phi_{c}(\cdot )$, respectively. When optimizing the radiance and opacity functions, we compare image frames with rendered images obtained by VR, which involves $\phi_{o}(\cdot)$ and $\phi_{c}(\cdot)$ simultaneously. Let $\mathcal{S}_l=\{\mathsf{X}_i\}_{i=1}^n$ be an ordered set of 3D points along a ray $l$ (emanating out and in front of the camera) that passes through 2d-pixel $\mathsf{x}$. The rendered RGB value at $\mathsf{x}$ is,
\begin{equation}
    \mathbf{I} (\mathsf{x}) = \sum_{\mathsf{X}_i\in{\mathcal{S}_l}}{w_i\phi_c(\mathsf{X}_{i})},
\end{equation}
and the corresponding depth is rendered similarly,
\begin{equation}
    D (\mathsf{x}) = \sum_{\mathsf{X}_i\in{\mathcal{S}_l}}{w_i D_i},
\end{equation}
where $D_i$ is the depth of point $\mathsf{X}_i$, and the weights $w_i$ are:
\begin{equation}
\label{eq:weights_orriginal}
    w_i = \phi_o(\mathsf{X}_i)\prod_{j=0}^{i-1}(1-\phi_o(\mathsf{X}_j)).
\end{equation}

We are interested in incorporating the prior for the opaque surfaces into the NeRF-SLAM system. Following the standard practice in NeRF \cite{mildenhall2021nerf}, we use sigmoid functions to model the output activations of both $\phi_{o}(\cdot)$ and $\phi_{c}(\cdot)$. We are now ready to introduce our problem.

\begin{figure}[!t]
    \centering
    \vspace{3pt}
    \includegraphics[width=\linewidth]{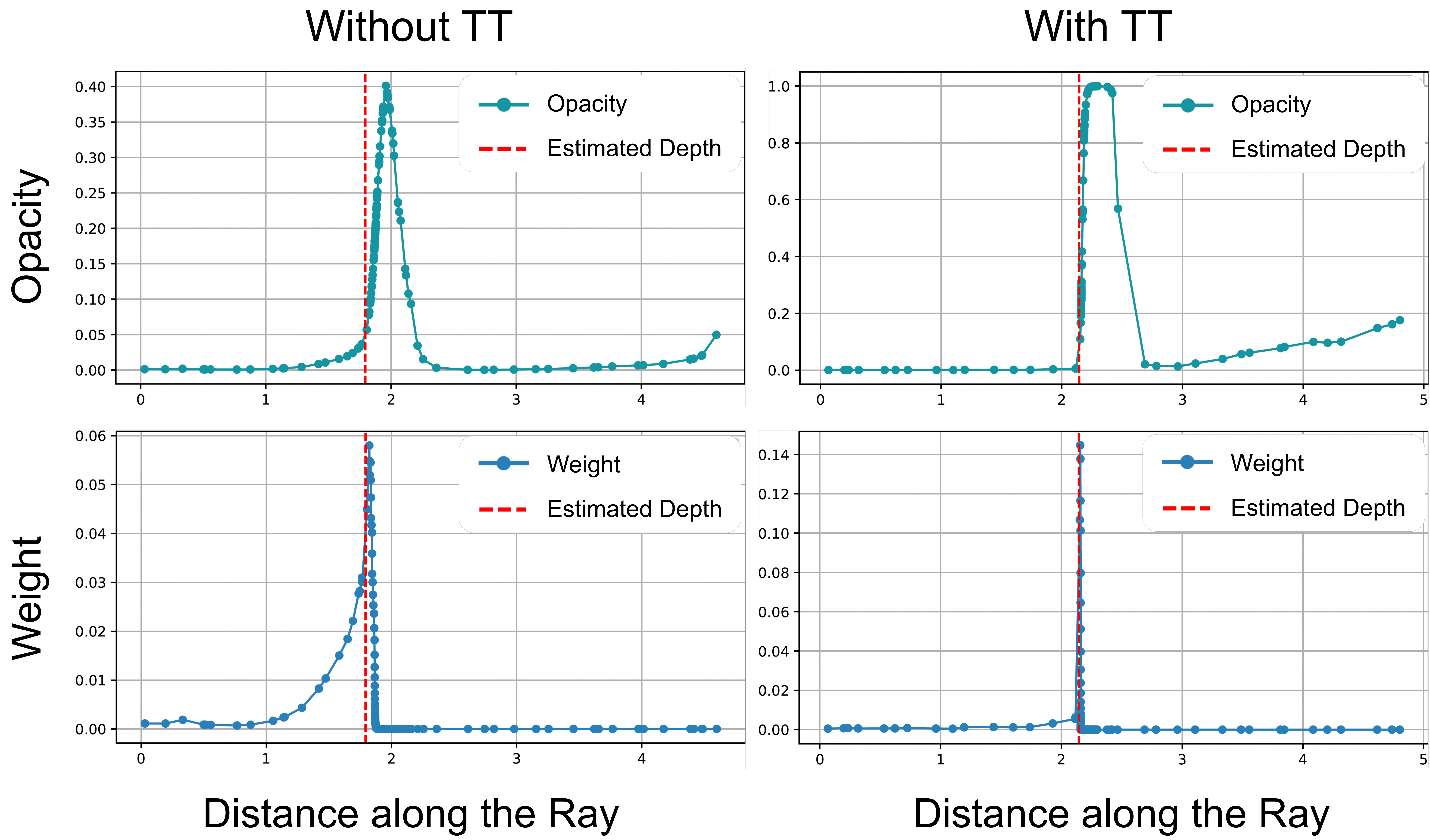}
    \caption{\textbf{Opacity and Weights along a Ray}. With the ternary-type opacity (TT), the weights along a randomly sampled ray are more concentrated near the depth with a higher peak. The dots on the curves represent the sampled points on the ray. Data obtained from Replica~\cite{straub2019replica} \texttt{office-0} at the end of the training.}
    \label{fig:along_ray}
\end{figure}

\section{Ternary-Type Opacity}
\label{sec:ttSec}

\begin{figure*}[t]
    \centering
    \vspace{3pt}
    \includegraphics[width=\linewidth]{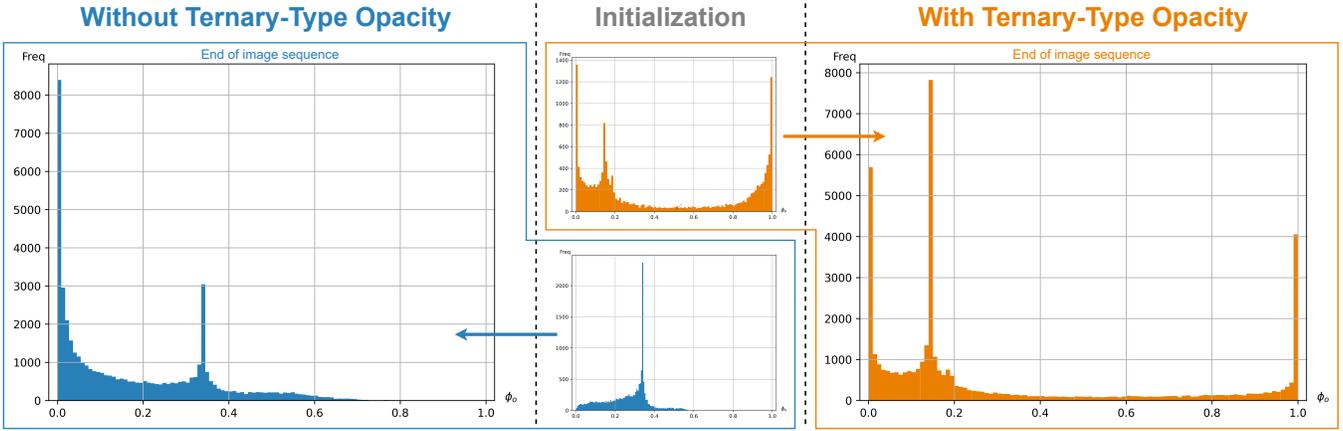}
    \caption{\textbf{Illustration of Ternary Opacity}. Randomly sampled 3D points have their features extracted through interpolation across multi-resolution feature grids. These features are input to a neural network to predict color and opacity. The resulting opacity histograms, shown above, demonstrate our method's effectiveness in generating the desired ternary-type opacity.}
    \label{fig:ternary_opacity}
\end{figure*}

In this section, we provide the insight of TT.

\begin{problem}\label{pb:binaryWeights}
Recover the radiance field by minimizing the image reconstruction error between the ground-truth (GT) color $c$ and rendered color, under an additional binary constraint on the weights, in the following form,
\begin{align}
    \label{eq:originalProblem}
    \begin{split}
    \underset{\phi, w_i}{\text{minimize}}\quad&\norm{ c - \sum_{\mathsf{X}_i\in{\mathcal{S}_l}}{w_i\phi_c(\mathsf{X}_{i})}}, \\
    \text{subject to}\quad &w_i\in \{0,1\}.
    \end{split}
\end{align}
\end{problem}
Correspondingly, Eq.~\eqref{eq:weights_orriginal} will have the constraint:

\begin{equation}
    w_i = \phi_o(\mathsf{X}_i)\prod_{j=0}^{i-1}(1-\phi_o(\mathsf{X}_j)) \in \{0,1\}.
\end{equation}

\begin{lemma} \label{lm:conditions}
The desired weight constraints $w_i\in\{0,1\}$  for the totally ordered set $\mathcal{S}_l$ can be achieved if and only if at least one of the following statements is true,
\begin{align*}
    \text{i.} \quad \phi_o(\mathsf{X}_i)&=0.\\
    \text{ii.} \quad \phi_o(\mathsf{X}_j)&=1 \;\; \text{for some} \;\; \mathsf{X}_j \in \mathcal{S}_l \;\; \text{with} \;\; j<i.\\
    \text{iii.} \quad \phi_o(\mathsf{X}_i)&=1 \;\; \text{and} \;\; \phi_o(\mathsf{X}_j)=0 \;\; \text{for all} \;\; j<i.
\end{align*}
\end{lemma}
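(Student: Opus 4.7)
The plan is to establish the biconditional by splitting on the value of $w_i$, exploiting the fact that $\phi_o$ takes values in $[0,1]$ because it is the output of a sigmoid activation. Consequently, each factor of $w_i = \phi_o(\mathsf{X}_i)\prod_{j=0}^{i-1}(1-\phi_o(\mathsf{X}_j))$ lies in $[0,1]$, so $w_i \in [0,1]$ as well, and the constraint $w_i \in \{0,1\}$ is equivalent to $w_i$ attaining one of the two endpoints of this interval. This reduces the lemma to a case analysis: in each direction I handle $w_i = 0$ and $w_i = 1$ separately.

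For the sufficiency direction, I would simply substitute each hypothesis into the expression for $w_i$. Under (i), the leading factor $\phi_o(\mathsf{X}_i)$ vanishes and $w_i = 0$. Under (ii), some factor $(1-\phi_o(\mathsf{X}_j))$ with $j<i$ inside the product equals zero, so again $w_i = 0$. Under (iii), every factor $(1-\phi_o(\mathsf{X}_j))$ equals one and $\phi_o(\mathsf{X}_i) = 1$, so $w_i = 1$. Each check is a one-line substitution.

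For the necessity direction, assume $w_i \in \{0,1\}$. If $w_i = 1$, a product of numbers in $[0,1]$ can equal one only when every factor equals one; this forces $\phi_o(\mathsf{X}_i)=1$ and $\phi_o(\mathsf{X}_j)=0$ for every $j<i$, which is exactly condition (iii). If $w_i = 0$, at least one factor of the product must vanish; I would then identify which factor: if the leading factor $\phi_o(\mathsf{X}_i)$ vanishes we land in condition (i), and otherwise some $1-\phi_o(\mathsf{X}_j) = 0$ with $j<i$, giving condition (ii). Exhausting these possibilities shows that at least one of (i)--(iii) must hold.

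The argument is essentially bookkeeping on a product of numbers in $[0,1]$, so I do not expect any serious obstacle. The only mildly delicate point worth acknowledging is that a sigmoid activation never exactly attains $0$ or $1$; accordingly, the three conditions should be read as limiting targets toward which the optimizer must drive the opacities, consistent with how the binary-opacity prior motivates the ternary-type relaxation introduced in the remainder of the section.
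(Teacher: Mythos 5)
Your proposal is correct and follows essentially the same elementary case analysis as the paper: direct substitution for sufficiency, and for necessity the observation that a product of factors in $[0,1]$ equals $1$ only if every factor is $1$ and equals $0$ only if some factor vanishes. The only cosmetic difference is that you organize the necessity direction by splitting on the value of $w_i$ while the paper splits on the value of $\phi_o(\mathsf{X}_i)$; your closing remark that a sigmoid never exactly attains the endpoints is a fair caveat that the paper itself addresses via the softly-binarized decoder.
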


\begin{proof}
Notice that $\phi_o(\mathsf{X})\in[0,1]$, which is a bounded interval.
For forward case,
${w_i = \phi_o(\mathsf{X}_i)\prod_{j=0}^{i-1}(1-\phi_o(\mathsf{X}_j))}$ results in $w_i\in\{0,1\}$ with cases (i), (ii), and (iii), which can be verified by computations leading to $w_i=0$ for (i), $w_i=0$ for (ii), and $w_i=1$ of (iii).

In case (i), all cases with $\phi_o(\mathsf{X}_i)=0$ are already covered. On the other hand, when $\phi_o(\mathsf{X}_i)<1$, the weights become $w_i<1$, as $\prod_{j=0}^{i-1}(1-\phi_o(\mathsf{X}_j)) \leq 1$. This leads to the only $w_i=0$ choice to be taken, with $\prod_{j=0}^{i-1}(1-\phi_o(\mathsf{X}_j)) =0$ being possible only if case (ii) is satisfied. Similarly, when $\phi_o(\mathsf{X}_i)=1$, $w_i\in\{0,1\}$ is possible only with $\prod_{j=0}^{i-1}(1-\phi_o(\mathsf{X}_j))\in\{0,1\}$. Under this condition, $\prod_{j=0}^{i-1}(1-\phi_o(\mathsf{X}_j))=0$ leads to case (ii), whereas $\prod_{j=0}^{i-1}(1-\phi_o(\mathsf{X}_j))=1$ leads to the case (iii). We exhausted all possibilities for $w_i\in \{0,1\}$, all of them leading to the listed three statements.
\end{proof}

\begin{lemma}\label{lm:optimal}
    For any given optimal solution of the Problem~\ref{pb:binaryWeights}, there exist nonenumerable sets of opacity measures ${\mathcal{S}_o^*  = \{\phi_o (\mathsf{X}_i)|\mathsf{X}_i\in\mathcal{S}_l\}}$ leading to the same outcome.
\end{lemma}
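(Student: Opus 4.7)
The plan is to apply Lemma~\ref{lm:conditions} to any optimal solution in order to locate a ``surface sample'' on the ray, and then show that all samples beyond it carry a free opacity parameter whose range is uncountable. I would begin by taking any optimizer $(\phi_o^*,\phi_c^*)$ of Problem~\ref{pb:binaryWeights}: every weight $w_i^*$ lies in $\{0,1\}$, and a short telescoping calculation on Eq.~\eqref{eq:weights_orriginal} gives $\sum_i w_i = 1 - \prod_i(1-\phi_o(\mathsf{X}_i)) \leq 1$, so a nontrivial rendered color forces exactly one index $k$ with $w_k^*=1$. Lemma~\ref{lm:conditions} then pins down the structure at that index: case (iii) must hold, i.e., $\phi_o^*(\mathsf{X}_k)=1$ and $\phi_o^*(\mathsf{X}_j)=0$ for every $j<k$.

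The crucial observation is next: for every $i>k$, the product in Eq.~\eqref{eq:weights_orriginal} defining $w_i$ contains the factor $(1-\phi_o(\mathsf{X}_k))$, and with $\phi_o^*(\mathsf{X}_k)=1$ this factor vanishes, so $w_i \equiv 0$ \emph{regardless} of the value assigned to $\phi_o(\mathsf{X}_i)$. Since each weight $w_j$ depends only on opacities at indices up to $j$, replacing $\phi_o^*(\mathsf{X}_i)$ at any single $i>k$ with an arbitrary value $\tilde\phi_i \in [0,1]$ leaves every $w_j$ unchanged, and therefore preserves the rendered color, the rendered depth, and the binary feasibility $w_j\in\{0,1\}$. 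The perturbed configuration is thus itself an optimal solution of Problem~\ref{pb:binaryWeights}.

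Finally, because $[0,1]$ is uncountable, letting $\tilde\phi_i$ range over $[0,1]$ at just one post-surface index already produces a continuum, hence a nonenumerable family, of opacity sets $\mathcal{S}_o^*$ that all yield exactly the same outcome, which is the claim. The main obstacle I anticipate is the degenerate boundary case $k=n$, where the surface sample is the last one on the ray and no index $i>k$ exists to perturb; I would resolve this by appealing to the standard NeRF sampling convention that places samples up to a far plane strictly beyond opaque content, so generically at least one post-surface index is available. A symmetric remark covers the trivial case of a ray through empty space, where the same conclusion can be obtained by letting $\phi_c^*$ at any unused sample vary instead.
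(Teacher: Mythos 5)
Your argument is correct and rests on the same core observation as the paper's own proof: once some sample on the ray is fully opaque, the opacities of every sample behind it enter Eq.~\eqref{eq:weights_orriginal} only through a vanishing transmittance factor, so they may take any value in the uncountable interval $[0,1]$ without changing any weight, the objective, or the feasibility of Problem~\ref{pb:binaryWeights}. The difference is one of rigor rather than of route. The paper merely asserts that some $\phi_o(\mathsf{X}_i)$ ``influence neither the objective nor the constraints'' and that the possibilities left open by Lemma~\ref{lm:conditions} are nonenumerable; you actually locate those free variables, via the telescoping identity $\sum_i w_i = 1-\prod_i\bigl(1-\phi_o(\mathsf{X}_i)\bigr)\le 1$, which together with $w_i\in\{0,1\}$ and a nontrivial rendered color forces a unique index $k$ with $w_k=1$ realizing case (iii) of Lemma~\ref{lm:conditions}, after which every index $i>k$ carries a genuinely free opacity. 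This added precision also surfaces the two degenerate configurations the paper silently skips (surface at the last sample, and an empty ray), where no post-surface index exists to perturb. One small caution on your fix for the empty-ray case: varying $\phi_c$ at unused samples changes the radiance field but not the opacity set $\mathcal{S}_o^*$, which is what the lemma quantifies; note moreover that all-zero weights actually force $\phi_o(\mathsf{X}_i)=0$ for every $i$ by induction, so to get nonenumerably many opacity sets with the same (zero) outcome you should instead exhibit the alternative family $\phi_o(\mathsf{X}_0)=1$, $\phi_c(\mathsf{X}_0)=0$, with arbitrary opacities behind $\mathsf{X}_0$.
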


\begin{proof}
    The proof relies on the fact that some variables involved, specifically some $\phi_o (\mathsf{X}_i)$, influence neither the objective function nor the constraints. For the sake of simplicity and sufficiency, we provide the proof from the point of view of an additional point having freedom to have nonenumerable possibilities, without affecting the discussed objective and the constraints. 
    In any general setting, any $w_i\in\{0,1\}$ arising by satisfying any of the statements of Lemma~\ref{lm:conditions}   may have arisen with a different setting of  $\phi_o (\mathsf{X}_i)$. These are exactly the possibilities not covered by the statements in Lemma~\ref{lm:conditions}. Now it is straightforward to see that the possibilities not covered by the statements in Lemma~\ref{lm:conditions} are indeed nonenumerable. This leads to the nonenumerable sets of opacity measures ${\mathcal{S}_o^*  = \{\phi_o (\mathsf{X}_i)|\mathsf{X}_i\in\mathcal{S}_l}\}$ with the same outcome.  
\end{proof}

\begin{lemma}\label{lm:binary}
     Any set ${\mathcal{S}_o  = \{\phi_o (\mathsf{X}_i)\in \{0,1\}|\mathsf{X}_i\in\mathcal{S}_l}\}$ that minimizes the objective of Eq.~\eqref{eq:originalProblem}, is also a valid solution to the Problem~\ref{pb:binaryWeights}.
\end{lemma}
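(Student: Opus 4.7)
My plan is to show that binary opacities automatically induce binary weights, so that any minimizer $\mathcal{S}_o$ with $\phi_o(\mathsf{X}_i)\in\{0,1\}$ is automatically feasible for Problem~\ref{pb:binaryWeights}; combined with the minimization hypothesis, this gives validity as a solution. In other words, the crux is to verify feasibility, since objective attainment is handed to us by assumption.

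First, I would fix an arbitrary $\mathcal{S}_o$ with $\phi_o(\mathsf{X}_i)\in\{0,1\}$ for every $i$ and walk along the ordered set $\mathcal{S}_l$ index by index, invoking Lemma~\ref{lm:conditions} pointwise. The case split is driven entirely by the two possible values of $\phi_o(\mathsf{X}_i)$: if the current opacity equals $0$, statement (i) of that lemma applies; if it equals $1$ and some earlier $\phi_o(\mathsf{X}_j)$ with $j<i$ is also $1$, statement (ii) applies; and if it equals $1$ while every earlier $\phi_o(\mathsf{X}_j)$ is $0$, statement (iii) applies. Each of these three cases yields $w_i\in\{0,1\}$ via the telescoping product $w_i=\phi_o(\mathsf{X}_i)\prod_{j=0}^{i-1}(1-\phi_o(\mathsf{X}_j))$, so no opacity assignment in $\mathcal{S}_o$ escapes the covered cases.

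Second, having established $w_i\in\{0,1\}$ for every $i$, I would conclude that $\mathcal{S}_o$ satisfies the constraint of Eq.~\eqref{eq:originalProblem} and is therefore feasible for Problem~\ref{pb:binaryWeights}. Combining feasibility with the assumption that $\mathcal{S}_o$ already minimizes the reconstruction term, I obtain a point that is simultaneously feasible and objective-minimizing, which is exactly the definition of a valid solution of the constrained problem.

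I do not anticipate any serious obstacle: Lemma~\ref{lm:conditions} performs nearly all the heavy lifting, and the residual argument is a short case analysis on a bounded set of values. The only point needing a sentence of care is that the objective being minimized in the hypothesis is textually identical to the objective of Problem~\ref{pb:binaryWeights}, so minimality transfers verbatim once feasibility is in hand; no convergence, continuity, or topological arguments are required.
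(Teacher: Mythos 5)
Your argument is correct, and in fact the paper gives no proof to compare against---it states only that ``the proof is avoided as it is straightforward''---so your write-up supplies exactly the content the authors omitted. Two small remarks. First, you do not actually need Lemma~\ref{lm:conditions} for the feasibility step: if every $\phi_o(\mathsf{X}_j)\in\{0,1\}$, then every factor of $w_i = \phi_o(\mathsf{X}_i)\prod_{j=0}^{i-1}\bigl(1-\phi_o(\mathsf{X}_j)\bigr)$ lies in $\{0,1\}$, hence so does the product; the three-way case split is a detour. Second, the one place that deserves a sentence more care than you give it is the transfer of optimality, because it depends on what ``minimizes the objective of Eq.~\eqref{eq:originalProblem}'' quantifies over. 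If it means the unconstrained minimum over all opacity assignments, then feasible plus unconstrained-optimal immediately implies constrained-optimal, as you say. But if it means the minimum over \emph{binary} opacity sets only, you must also check that restricting to binary opacities does not exclude any objective value attainable under the weaker constraint $w_i\in\{0,1\}$. This does hold: by Eq.~\eqref{eq:weights_orriginal} a binary weight vector is either identically zero or has a single entry equal to one, and both patterns are realizable with $\phi_o\in\{0,1\}$ (all zeros, respectively a single one preceded by zeros), so the two minima coincide. With that observation added, the proof is complete.
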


\begin{proof}
The proof is avoided as it is straightforward.
\end{proof}

\begin{theorem} [Relevant Binary-type Opacity]
\label{th:releventBinary}
For an ordered set $\mathcal{S}_o$ and ordered partition $\mathcal{S}_o = \{\mathcal{S}_o^1,\mathcal{S}_o^2\}$, let $\mathcal{S}_o^1=\{\mathsf{X}_k\}_{k=0}^i$. If $\mathcal{S}_o^1$ minimizes the objective of Eq.~\eqref{eq:originalProblem} with $\phi_o(\mathsf{X}) \in \{0,1\}$ for all $\mathsf{X}\in\mathcal{S}_o^1$, except for $\mathsf{X}_i$ when $\phi_o(\mathsf{X})=0$, then it is also a valid solution to Problem~\ref{pb:binaryWeights}.
\end{theorem}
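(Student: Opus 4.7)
The plan is to show that, once $\mathcal{S}_o^1$ has been optimally chosen with binary opacities, the remaining partition $\mathcal{S}_o^2$ is irrelevant to both the objective and to the binary-weight constraint of Problem~\ref{pb:binaryWeights}. First I would translate the hypothesis through Lemma~\ref{lm:conditions}: having $\phi_o(\mathsf{X}) \in \{0,1\}$ throughout $\mathcal{S}_o^1$ falls into case~(iii) when $\phi_o(\mathsf{X}_i)=1$ (with $\phi_o(\mathsf{X}_k)=0$ for $k<i$), or into case~(i) when $\phi_o(\mathsf{X}_i)=0$, which is the boundary situation singled out by the exceptional clause. In both cases the weights $w_k$ for $k\leq i$ already satisfy the binary constraint, and the binary structure on $\mathcal{S}_o^1$ is consistent with Lemma~\ref{lm:binary}.

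Next I would examine the weights on $\mathcal{S}_o^2$, i.e.\ the $w_j$ with $j>i$. When $\phi_o(\mathsf{X}_i)=1$, the product in Eq.~\eqref{eq:weights_orriginal} picks up the factor $(1-\phi_o(\mathsf{X}_i))=0$, so that $w_j=0$ automatically for every $j>i$ and for any choice of $\phi_o$ on $\mathcal{S}_o^2$. Hence the rendered color reduces to $\sum_{k\leq i} w_k\phi_c(\mathsf{X}_k)$, which is exactly what $\mathcal{S}_o^1$ was assumed to minimize, and every $w_j$ in the tail is trivially in $\{0,1\}$. In the degenerate case $\phi_o(\mathsf{X}_i)=0$, all weights on $\mathcal{S}_o^1$ vanish, the prefix contributes nothing to the rendered radiance, and the problem on the ray reduces to Problem~\ref{pb:binaryWeights} posed on $\mathcal{S}_o^2$, whose validity again follows from Lemma~\ref{lm:binary}.

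Combining the two parts, the concatenated solution attains the minimum objective across $\mathcal{S}_l$ while satisfying $w_i\in\{0,1\}$ throughout, which is precisely what Problem~\ref{pb:binaryWeights} requires. The main obstacle I foresee is handling the exceptional clause ``except for $\mathsf{X}_i$ when $\phi_o(\mathsf{X})=0$'' rigorously, since it silently bundles two qualitatively different situations (surface encountered within $\mathcal{S}_o^1$ versus surface not yet encountered); carefully separating them into a short case analysis based on Lemma~\ref{lm:conditions} should dissolve the difficulty, reducing the argument to two elementary computations of the weight product.
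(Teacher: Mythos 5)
Your proof is essentially correct, and it is worth noting that the paper itself gives no argument here --- its stated proof is literally ``follows directly from the definitions and is therefore omitted for brevity'' --- so your writeup supplies the computation the authors skipped. The core of your argument is the right one: when $\phi_o(\mathsf{X}_i)=1$ and $\phi_o(\mathsf{X}_k)=0$ for $k<i$, the factor $(1-\phi_o(\mathsf{X}_i))=0$ in Eq.~\eqref{eq:weights_orriginal} annihilates every $w_j$ with $j>i$, so the opacities on $\mathcal{S}_o^2$ affect neither the rendered color nor the binary-weight constraint, and the hypothesis that $\mathcal{S}_o^1$ minimizes the objective transfers verbatim to the full ray; this is exactly the ``relevant vs.\ irrelevant'' dichotomy that the remark following the theorem relies on. (A minor imprecision: binary opacity on the prefix does not by itself force the pattern $0,\dots,0,1$; the first opaque index could occur before $i$, but then case~(ii) of Lemma~\ref{lm:conditions} zeroes everything downstream and the same argument goes through.) One genuine soft spot: in the degenerate branch $\phi_o(\mathsf{X}_i)=0$ you discharge the tail by invoking Lemma~\ref{lm:binary} on $\mathcal{S}_o^2$, but the theorem's hypotheses say nothing about $\phi_o$ being binary or optimal on $\mathcal{S}_o^2$, so Lemma~\ref{lm:binary} does not literally apply there; if the opacities on $\mathcal{S}_o^2$ are arbitrary, the weights $w_j$ for $j>i$ need not lie in $\{0,1\}$ and the constraint of Problem~\ref{pb:binaryWeights} can fail. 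That is arguably a defect of the theorem statement rather than of your argument --- you correctly identified that the ``except'' clause bundles two qualitatively different situations --- but to close the proof you should either assume the surface is encountered within $\mathcal{S}_o^1$ (making the degenerate branch vacuous) or state explicitly that this branch needs the analogous hypothesis imposed on $\mathcal{S}_o^2$.
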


\begin{proof}
The proof follows directly from the definitions and is therefore omitted for brevity.
\end{proof}

Note that in Theorem~\ref{th:releventBinary}, the irrelevant set $\mathcal{S}_o^2$ impacts neither the objective nor the constraint of the Problem~\ref{pb:binaryWeights}. Optimizing $\mathcal{S}_o^2$ is thus unnecessary. Yet, we have no knowledge of partition ${\mathcal{S}_o = \{\mathcal{S}_o^1,\mathcal{S}_o^2\}}$ beforehand. Additionally, some members of $\mathcal{S}_o^2$ might affect other pixels $c_p$. In either case, an eventual division between relevant and irrelevant sets with the very same properties (except the ordered subsets) is still achievable. Thus, we proceed with a single pixel-based formulation, involving a set $\mathcal{S}_l$ over the corresponding ray, maintaining generality for 3D space sets. 

In our setting, $\phi_o(\mathsf{X})$ can be initialized to any value for any chosen $\mathsf{X}$. Therefore, we propose to initialize with a straightforward fixed value of all $\mathsf{X}\in\mathsf{S}_l$. However, we also find that some trivial initializations lead to undesired outcomes during the iterative optimization.   

\begin{proposition}
Under the gray-world assumption, the initializations $\phi_{o}(\mathsf{X}) = 0, \forall \mathsf{X}\in\mathcal{S}_l$ or ${\phi_{o}(\mathsf{X}) = 1,}{ \forall \mathsf{X}\in\mathcal{S}_l}$  
 are sub-optimal for iterative optimization of Problem~\ref{pb:binaryWeights}.
\end{proposition}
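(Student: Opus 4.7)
The plan is to analyse the gradient of the per-ray reconstruction loss $L=\|c-\sum_i w_i\phi_c(\mathsf{X}_i)\|^2$ with respect to each opacity $\phi_o(\mathsf{X}_k)$ under each of the two candidate uniform initialisations, and to show that, under the gray-world assumption, the resulting updates cannot steer the iterates toward any of the three structured configurations characterised by Lemma~\ref{lm:conditions}. The first step is to record once and for all the two flavours of partial derivative that enter the chain rule: the direct term $\partial w_k/\partial \phi_o(\mathsf{X}_k)=\prod_{j=0}^{k-1}(1-\phi_o(\mathsf{X}_j))$, and, for $i>k$, the cross terms $\partial w_i/\partial \phi_o(\mathsf{X}_k)=-\phi_o(\mathsf{X}_i)\prod_{j=0,\,j\neq k}^{i-1}(1-\phi_o(\mathsf{X}_j))$.

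For the initialisation $\phi_o(\mathsf{X})\equiv 0$, every $w_i$ is zero, so the rendered color is $\mathbf{I}=0$ and the cross terms vanish, leaving $\partial L/\partial \phi_o(\mathsf{X}_k)\propto -\,c\,\phi_c(\mathsf{X}_k)$. Under gray-world, the target $c$ (averaged over rays) and the initial radiance $\phi_c(\mathsf{X})$ are both approximately the same gray tone, so every sampled point on the ray receives an essentially identical update. Consequently all $\phi_o(\mathsf{X}_i)$ move in lock-step, preserving the uniformity of the opacity profile rather than carving out the sharp transition required by any of the three cases of Lemma~\ref{lm:conditions}.

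For the initialisation $\phi_o(\mathsf{X})\equiv 1$, the factor $(1-\phi_o(\mathsf{X}_0))=0$ kills every $w_i$ with $i\geq 1$, and the same zero factor appears both in the direct product $\prod_{j=0}^{k-1}(1-\phi_o(\mathsf{X}_j))$ for any $k\geq 1$ and in every cross product with $k\geq 1$. Hence $\partial L/\partial \phi_o(\mathsf{X}_k)=0$ for all $k\geq 1$, so the opacities of points behind $\mathsf{X}_0$ are stationary and remain pinned at $1$, directly violating the condition in Lemma~\ref{lm:conditions} that all earlier opacities be zero once a later one is $1$. The lone non-stationary coordinate $\phi_o(\mathsf{X}_0)$ carries a gradient proportional to $\phi_c(\mathsf{X}_0)-\phi_c(\mathsf{X}_1)$, which is itself negligible under gray-world, so even this coordinate barely moves.

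The main obstacle I foresee is phrasing the gray-world assumption precisely enough that ``the gradient is uninformative'' becomes a rigorous statement rather than a heuristic. My plan is to encode it as $c\approx\bar c$ and $\phi_c(\mathsf{X})\approx\bar\phi_c$ to zeroth order with bounded residuals, and then to show that the zeroth-order gradient in either case is incapable of producing the asymmetric ternary (before/on/behind) weight profile that Problem~\ref{pb:binaryWeights} demands. With this in place, neither uniform initialisation can be optimal for the iterative scheme, which is the claim.
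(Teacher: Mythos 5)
Your route is genuinely different from the paper's, and as written it has gaps. The paper's proof does no gradient analysis at all: it observes that under either uniform initialization every weight vanishes (the transmittance product $\prod_{j=0}^{i-1}(1-\phi_o(\mathsf{X}_j))$ starts at $j=0$, so in the all-ones case the factor $(1-\phi_o(\mathsf{X}_0))=0$ annihilates every $w_i$), hence the rendered image is black; under gray-world the expected color is $C_{max}/2$, and a black image is as far from that as an initialization can be, so the starting point is sub-optimal. That is the entire argument --- a statement about the initial loss value, not about the dynamics. Your instinct to analyze the dynamics is arguably aimed at the more meaningful notion of ``sub-optimal for \emph{iterative} optimization'' (a large initial loss alone does not doom gradient descent), but you have not closed the argument.

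Concretely: (1) the ``lock-step'' claim for $\phi_o\equiv 0$ holds only at the first iteration; after one uniform update to some $\epsilon>0$ the weights become $w_i=\epsilon(1-\epsilon)^{i}$, which decay along the ray, the cross terms no longer vanish, and the gradients differentiate across $k$ --- so uniformity of the profile is not preserved and your conclusion does not follow without tracking several steps of the dynamics. (2) You repeatedly use ``gray-world'' to mean that $\phi_c$ is spatially constant along a ray (``every sampled point receives an essentially identical update,'' ``$\phi_c(\mathsf{X}_0)-\phi_c(\mathsf{X}_1)$ is negligible''). The gray-world assumption, as the paper uses it, only fixes the expected value of the ground-truth color $c$ at $C_{max}/2$; it says nothing about spatial uniformity of the radiance field. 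Uniformity of $\phi_c$ at initialization is a separate (and unstated) assumption about the feature-grid initialization. (3) In the all-ones case, points pinned at opacity $1$ behind $\mathsf{X}_0$ do \emph{not} violate Lemma~\ref{lm:conditions}: its case (ii) explicitly allows arbitrary opacity behind an opaque point. The actual defect of that initialization is that the surface collapses onto the first sample; and since $\phi_c$ is optimized jointly, the photometric residual can be absorbed by $\phi_c(\mathsf{X}_0)$ alone, which is the degenerate-geometry argument you would need to make explicit. If you repair these three points, your proof would be a strictly more informative (dynamics-based) justification than the paper's; as it stands it is not yet a proof.
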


\begin{proof}
When
$\phi_{o}(\mathsf{X}) = 0, \forall \mathsf{X}\in\mathcal{S}_l$ or ${\phi_{o}(\mathsf{X}) = 1,}{ \forall \mathsf{X}\in\mathcal{S}_l}$, this leads to $w_i=0$ in all cases. Hence, the term   
$\sum_{\mathsf{X}_i\in{\mathcal{S}_l}}{w_i\phi_c(\mathsf{X}_{i})} = 0$ in Equation~\eqref{eq:originalProblem} in the original Problem~\ref{pb:binaryWeights}, and both the initializations separately result in zero (also known as black) images. Under the gray world assumption such initialization is the farthest possible one. Let $[0,C_{max}]$ be the range of the color $c$. The gray-world assumption sets the expected value of $c$ to $C_{max}/2$. However, both initializations lead to $c=0$, which is the farthest value from $C_{max}/2$, which makes such initialization sub-optimal under the gray-world assumption. In other words, $C_{max}/2$ would have been a better initialization for the made assumption, and any initialization between $0$ (representing the zero image) and $C_{max}/2$, would have been better than the zero image.
\end{proof}

We resort to the map initialization step of the SLAM to initialize $\phi_o( \cdot )$. We first present our decomposition of $\phi_o( \cdot )$, which is illustrated in Fig. \ref{fig:schematic}.
To obtain the opacity of a 3D point $\mathsf{X}$, we embed it into a feature vector $\phi_f{(\mathsf{X})}$, and then decode the feature vector into opacity using decoder $\phi_d(\cdot )$ such that ${\phi_o(\mathsf{X})=\phi_d(\phi_f{(\mathsf{X})})}$. During the map initialization, we learn and subsequently freeze the decoder $\phi_d(\cdot )$, with only $\phi_f( \cdot )$ being optimized in later tracking and mapping stages. In the latter stage, as we initialize all $\phi_f(\cdot)$ with a constant value $\eta$, for any point in the space that has not yet been optimized, it always has $\phi_f(\cdot)$ being $\eta$, and thus also keep the opacity to the initial value, denoted as $o_{init}$. Now, we are interested in addressing the following problem. 
\begin{problem} \label{pb:problem2}
Optimize the objective function of Problem~\ref{pb:binaryWeights} such that $\phi_o(\mathsf{X})\in\{0,1\}$ for all $\mathsf{X}\in\mathcal{S}_o^1$, while fixing $\phi_o(\mathsf{X})=o_{init}$ for all the irrelevant part $\mathsf{X}\in\mathcal{S}_o^2$. 
\end{problem}

\begin{theorem}
    The solution to Problem~\ref{pb:problem2} will optimize Problem~\ref{pb:binaryWeights} without violating the constraints and with the optimal transport for $|\mathcal{S}_o^2|>>|\mathcal{S}_o^1|$. 
\end{theorem}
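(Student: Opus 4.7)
The plan is to combine Theorem~\ref{th:releventBinary} with Lemma~\ref{lm:conditions} and Lemma~\ref{lm:binary} to establish three sub-claims: (a) feasibility of Problem~\ref{pb:problem2}'s solution as a solution of Problem~\ref{pb:binaryWeights}, (b) equality of the attained objective values, and (c) minimality of opacity displacement from the initialization in the regime $|\mathcal{S}_o^2|\gg|\mathcal{S}_o^1|$.

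First, for feasibility, I would invoke Lemma~\ref{lm:conditions} along the ordered ray. By Theorem~\ref{th:releventBinary}, $\mathcal{S}_o^1 = \{\mathsf{X}_k\}_{k=0}^i$ is ordered so that $\phi_o(\mathsf{X}_i)=1$ and $\phi_o(\mathsf{X}_k)=0$ for $k<i$, which places every $\mathsf{X}\in\mathcal{S}_o^2$ strictly after the first index with $\phi_o=1$. Statement (ii) of Lemma~\ref{lm:conditions} then forces $w=0$ on all of $\mathcal{S}_o^2$ regardless of the value assigned there, so fixing $\phi_o(\mathsf{X})=o_{init}$ on $\mathcal{S}_o^2$ never violates the binary weight constraint of Problem~\ref{pb:binaryWeights}.

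Second, for objective equivalence, since $\mathcal{S}_o^2$ contributes zero weight to the rendered color in the sum of Eq.~\eqref{eq:originalProblem}, the objective value attained by the restricted program in Problem~\ref{pb:problem2} coincides exactly with the value attained by the binary assignment of Theorem~\ref{th:releventBinary} on $\mathcal{S}_o^1$. By Lemma~\ref{lm:binary}, any binary-$\phi_o$ assignment that minimizes the objective is a valid solution of Problem~\ref{pb:binaryWeights}; hence the optimizer of Problem~\ref{pb:problem2} is itself an optimizer of Problem~\ref{pb:binaryWeights}, with no constraint violation.

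Third, for the optimal transport claim, I would interpret ``transport'' as the cumulative displacement $\sum_{\mathsf{X}\in\mathcal{S}_o}|\phi_o(\mathsf{X})-o_{init}|$ from the shared initialization. Problem~\ref{pb:problem2} pays zero transport on every $\mathsf{X}\in\mathcal{S}_o^2$ by construction and at most $\mathcal{O}(|\mathcal{S}_o^1|)$ transport on the relevant set. In contrast, by Lemma~\ref{lm:optimal} the unrestricted Problem~\ref{pb:binaryWeights} admits nonenumerable optima, and a generic optimizer may displace opacities across the entire $\mathcal{S}_o$, incurring $\mathcal{O}(|\mathcal{S}_o^1|+|\mathcal{S}_o^2|)$ transport. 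When $|\mathcal{S}_o^2|\gg|\mathcal{S}_o^1|$, the restricted scheme is asymptotically transport-optimal, since any feasible solution must at least update the $\mathcal{S}_o^1$ entries to binary values, a cost that Problem~\ref{pb:problem2} already matches.

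The hard part will be making the ``optimal transport'' clause rigorous, because the paper does not formally define a transport metric. I would make precise the cost as the total $L_1$ change of opacity from $o_{init}$, explicitly argue that any alternative Problem~\ref{pb:binaryWeights} optimum that modifies $\mathcal{S}_o^2$ adds strictly positive cost without improving the objective (by Lemma~\ref{lm:optimal}), and then conclude the asymptotic optimality bound. A secondary subtlety is that $o_{init}\notin\{0,1\}$ in general, so some minimum movement is unavoidable on $\mathcal{S}_o^1$; I would note this constant lower bound and show that Problem~\ref{pb:problem2} attains it up to the $\mathcal{O}(|\mathcal{S}_o^1|)$ term.
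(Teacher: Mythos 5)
Your proposal follows essentially the same two-part route as the paper's proof: feasibility of the Problem~\ref{pb:problem2} solution for Problem~\ref{pb:binaryWeights} is obtained as a corollary of Theorem~\ref{th:releventBinary} (you additionally trace it back to statement~(ii) of Lemma~\ref{lm:conditions}, which makes the zero-weight claim on $\mathcal{S}_o^2$ explicit), and transport optimality is obtained by observing that the irrelevant set is held at $o_{init}$ so only the $\mathcal{O}(|\mathcal{S}_o^1|)$ relevant entries move. Your decision to pin down ``transport'' as the total $L_1$ displacement of opacities from initialization supplies a definition the paper leaves implicit, but the argument's skeleton is the same.
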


\begin{proof}
The first part of the proof concerns establishing the relationship between the solution to Problem~\ref{pb:problem2} to that of Problem~\ref{pb:binaryWeights}. In particular, the non-violation of the constraint is addressed. Note that Problem~\ref{pb:problem2} imposes the constraint  $\phi_o(\mathsf{X})\in\{0,1\}$ for all $\mathsf{X}\in\mathcal{S}_o^1$. The use of these constraints is in fact motivated by the results of Theorem~\ref{th:releventBinary}. Since Theorem~\ref{th:releventBinary} assures the validity of the mentioned constraints to Problem~\ref{pb:binaryWeights}, this part of the proof follows as a direct Corollary from the same. 

The second part of the proof makes use of the ordered partition proposed in  Theorem~\ref{th:releventBinary}. It has been shown that the role of $\mathcal{S}_o^2$ is irrelevant regarding making any difference to Problem~\ref{pb:binaryWeights}. It is important to notice that $\mathcal{S}_o^2$  is a part of the variables being optimized. However, $\mathsf{X}\in\mathcal{S}_o^2$ do not contribute to the original problem in any form. When the suggestion of Problem~\ref{pb:problem2} on fixing $\phi_o(\mathsf{X}) = o_{init}$ for all irrelevant part $\mathsf{X} \in\mathcal{S}_o^2$ is considered, it lowers down the fraction of variables to be optimized, and in extreme cases to zero (or close to zero). When the distributions of $\phi_o(\mathsf{X})$  before and after the optimization are compared, it becomes clear that fixing $\phi_o(\mathsf{X}) = o_{init}$ leads to the optimal solution with the optimal transport, for  ${|\mathcal{S}_o^2|>>|\mathcal{S}_o^1|}$ with sufficiently high $|\mathcal{S}_o^2|$. The transport is minimized by fixing $\phi_o(\mathsf{X})$ for all irrelevant parts $\mathsf{X} \in\mathcal{S}_o^2$. 
\end{proof}

\begin{figure*}[t]
    \centering
    \includegraphics[width=\textwidth]{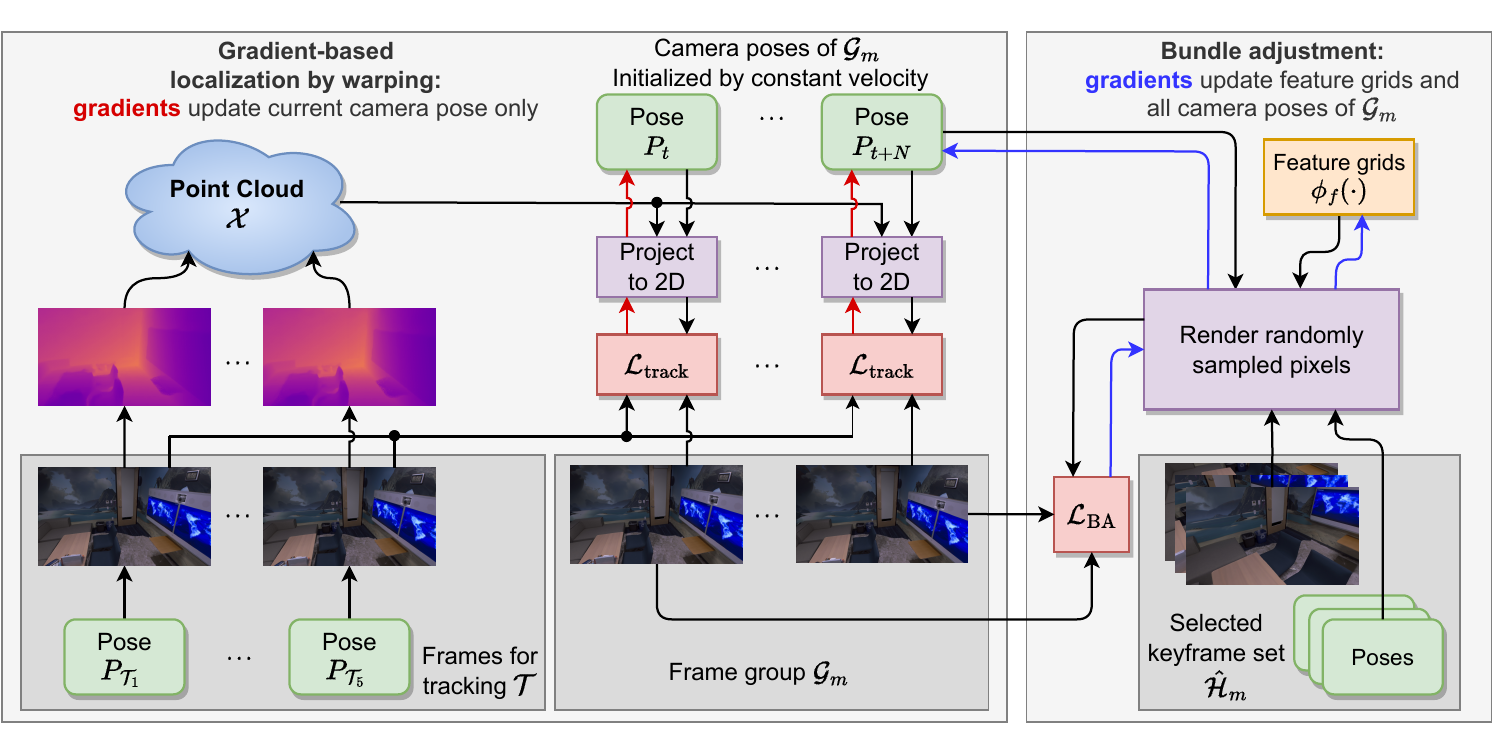}
    \caption{\textbf{Hybrid Odometry.} For simplicity $t$ refers to the index of the first frame from the current frame group $\mathcal{G}_m$ and $\mathcal{T}_1$ is the first frame from the tracking frames. In \textbf{gradient-based localization by warping}, we initialize poses by constant velocity, render the depth for some pixels on the last few frames $\mathcal{T}$ from the previous frame group to get point cloud $\mathcal{X}$, and update the camera poses by minimizing the reprojection loss of $\mathcal{X}$ to the current frame $\mathcal{G}_m$. In \textbf{bundle adjustment} we select several keyframes and together with $\mathcal{G}_m$ use VR to produce pixel values. The loss is backpropagated to the feature grids and the camera poses belonging to frames in $\mathcal{G}_m$.}
    \label{fig: hybrid_odometry}
\end{figure*}

To address Problem~\ref{pb:problem2}, a crucial unresolved matter involves dividing the set $\mathcal{S}_o$ into subsets $\mathcal{S}_o^1$ and $\mathcal{S}_o^2$. Unfortunately, this division cannot be known beforehand due to the nature of the SLAM pipeline. It is important to note that our goal is to produce binary outputs for $ \mathcal{S}_o^1$ and fixed value of $o_{init}$ for $\mathcal{S}_o^2$, in order to avoid unnecessary updates and thus make the process faster. On the other hand, imposing the prior of the necessary conditions on opacity (for binary weights) can lead to better solutions. To this end, we aim to model a softly-binarized decoder network $\phi_d(\cdot)$ during the map initialization stage. Learning this decoder jointly with $\phi_f(\cdot)$ allows us to choose $o_{init}$ meaningfully. More specifically, during the joint optimization, we set $\eta=0$ so that the outputs of $\phi_f(\cdot)$ are initialized to zero. The optimization process then automatically adjusts $\phi_d(0)$ for the scene, which also corresponds exactly to the irrelevant set $\mathcal{S}_o^2$. Note that $\phi_f(\cdot)$ values for $\mathcal{S}_o^2$ will be fixed at zeros, as they do not contribute to the optimization objective. Nevertheless, the value $\phi_d(0)$ changes in the initialization stage, after which it becomes fixed and is referred to as $o_{init}$.

\boldparagraph{Softly-Binarized Decoder.}
To achieve $\phi_d(\mathsf{X})\in\{0,1\}$ for all $\mathsf{X}\in\mathcal{S}_o^1$, we aim to binarize the decoder $\phi_d(\cdot)$ via an activation function. However, maintaining high-performance levels while binarizing activations is challenging. This challenge is due to neurons' reduced expressiveness from restricted states and discrete computational nodes, hindering gradients propagation during training~\cite{ma2019binary}. We address this by applying a softly-binarized sigmoid activation function with temperature $\tau$, tuned during the map initialization stage. Consequently, the final opacity distribution is encouraged to be ternary which offers better results in our experiments, attributed to the incorporation of real-world priors and VR formulation. We use the multi-scale grid-based representation of $\phi_f(\cdot)$, which makes the realization of our algorithm straightforward.

\section{Hybrid Odometry for NeRF-SLAM}
\label{sec:SLAM}

\subsection{SLAM System Design}
\boldparagraph{Initialization.} During initialization, we take the first $N_0$ frames to train the color and opacity decoder, as well as the feature grid-based scene representation and the camera poses. For the first two frames, we use the ground truth of the camera poses. We then estimate the following $(N_0 - 2)$ camera poses under the constant velocity motion model, where each pose is derived from the previous two. The loss used in initialization is similar to the one in BA.

\boldparagraph{Hybrid Odometry.}
After initialization, the color and opacity decoders are fixed. We group the frames and perform HO through a combination of gradient-based localization (GL) and BA. Each group has $N$ frames, and thus the $m$-th group $\mathcal{G}_{m} = {\{ N_0+(m-1)N+i \}_{i=1}^N}$. For notation purposes, we define the $0$-th group, $\mathcal{G}_0$, as the frames used for initialization, consisting of the first $N_0$ frames. Note that the last group may have less than $N$ frames, depending on the total number of frames in the sequence. We perform GL for each frame in order, and at the end of the group we perform BA for the whole group. An illustration of this process is shown in Fig.~\ref{fig: hybrid_odometry}.

\begin{center}
\begin{table*}[t]
\vspace{3pt}
\caption{\textbf{Camera Tracking Results on Replica and 7-Scenes.} ATE RMSE [cm] ($\downarrow$) is used as the evaluation metric. \texttt{o-x} and \texttt{r-x} denote office-x and room-x respectively in Replica~\cite{straub2019replica}. NICE-SLAM$^{\cross[.5pt]}$ and NeRF-SLAM$^{\cross[.5pt]}$ are taken from NICER-SLAM~\cite{zhu2023nicer}, and DIM-SLAM$^*$ is the re-implementation by us. The methods in the upper section of the table utilize either GT depth or pseudo depth, whereas those in the lower section do not use any depth information.}
\begin{adjustbox}{max width=\textwidth}
  \begin{tabular}{@{}l|cccccccc|ccccccc@{}}
    \toprule
    
    \multirow{2}{*} & \multicolumn{8}{c|}{\textbf{Replica~\cite{straub2019replica}}} & \multicolumn{7}{c}{\textbf{7-Scenes~\cite{shotton2013scene}}} \\
        & \texttt{o-0} & \texttt{o-1} & \texttt{o-2} & \texttt{o-3} & \texttt{o-4} & \texttt{r-0} & \texttt{r-1} & \texttt{r-2} & \texttt{chess} & \texttt{fire} & \texttt{heads} & \texttt{kitchen} & \texttt{office} & \texttt{pumpkin} & \texttt{stairs} \\
        
    \midrule

    NICE-SLAM$^{\cross[.5pt]}$~\cite{zhu2022nice} & 0.99 & 0.90 & 1.39 & 3.97 & 3.08 & 1.69 & 2.04 & 1.55 & 2.16 & 1.63 & 7.80 & 5.73 & 19.34 & 3.31 & 4.31\\
    
    NICER-SLAM~\cite{zhu2023nicer} & 2.12 & 3.23 & 2.12 & 1.42 & 2.01 & 1.36 & 1.60 & 1.14 & 3.28 & 6.85 & 4.16 & 3.94 & 10.84 & 20.00 & 10.81 \\
    
    NeRF-SLAM$^{\cross[.5pt]}$~\cite{rosinol2022nerf} & 12.75 & 10.34 & 14.52 & 20.32 & 14.96 & 17.26 & 11.94 & 15.76 & 9.34 & 8.57 & 4.44 & 9.02 & 16.67 & 43.96 & 5.41\\
    
    \midrule
    
    DIM-SLAM$^*$ & 0.86 & \textbf{0.47} & 3.50 & 1.95 & 30.39 & 80.16 & 10.10 & 27.48 & 5.22 & 9.61 & 8.01 & 29.92 & \textbf{9.76} & \textbf{17.12} & 12.83  \\
    
    Ours & \textbf{0.59} & 1.74 & \textbf{1.70} & \textbf{0.81} & \textbf{3.47} & \textbf{4.51} & \textbf{0.91} & \textbf{7.49} & \textbf{4.86} & \textbf{6.00} & \textbf{6.30} & \textbf{7.14} & 14.14 & 18.73 & \textbf{12.40}\\
    
    \bottomrule
  \end{tabular}
\end{adjustbox}
\label{tab:tracking_results}
\end{table*}
\end{center}

\begin{table*}[t]
\caption{\textbf{Geometric (L1) and Photometric (PSNR) results on Replica.} The pair of numbers denotes Depth L1 [cm] ($\downarrow$) and PSNR [dB] ($\uparrow$), respectively. iMAP$^{\cross[.5pt]}$ and NICE-SLAM$^{\cross[.5pt]}$ are taken from  NeRF-SLAM, and DIM-SLAM$^*$ is our re-implementation.}
\begin{adjustbox}{max width=\textwidth}
  \centering
  \begin{tabular}{@{}lcccccccc@{}}
    \toprule
    
    & \texttt{o-0} & \texttt{o-1}& \texttt{o-2} & \texttt{o-3} & \texttt{o-4} & \texttt{r-0} & \texttt{r-1} & \texttt{r-2}  \\
      
    \midrule

    iMAP$^{\cross[.5pt]}$~\cite{sucar2021imap} & (6.43, 7.39) & (7.41, 11.89) & (14.23, 8.12) & (8.68, 5.62) & (6.80, 5.98) & (5.70, 5.66) & (4.93, 5.31) & (6.94, 5.64)\\

    NICE-SLAM$^{\cross[.5pt]}$~\cite{zhu2022nice} & (1.51, 22.44) & (0.93, 25.22) & (8.41, 22.79) & (10.48, 22.94) & (2.43, 24.72) & (2.53, 29.90) & (3.45, 29.12) & (2.93, 19.80)\\

    NeRF-SLAM~\cite{rosinol2022nerf} & (2.97, 34.90) & (1.98, 53.44) & (9.13, 39.30) & (10.58, 38.63) & (3.59, 39.21) & (2.97, 34.90) & (2.63, 36.95) & (2.58, 40.75)\\

    \midrule
    
    DIM-SLAM$^*$ \cite{li2023dense} & (7.47, 29.44) & (6.52, 30.48) & (18.35, 20.28) & (24.79, 19.93) & (37.45, 18.84) & (87.23, 8.97) & (23.74, 20.76) & (33.55, 20.32)\\
    
    Ours & (\textbf{3.20}, \textbf{30.34}) & (\textbf{2.15}, \textbf{31.67}) & (\textbf{10.43}, \textbf{23.05}) & (\textbf{11.76}, \textbf{23.32}) & (\textbf{13.23}, \textbf{26.19}) & (\textbf{11.86}, \textbf{21.22}) & (\textbf{4.79}, \textbf{26.29}) & (\textbf{20.95}, \textbf{22.54}) \\
    
    \bottomrule
  \end{tabular}
\end{adjustbox}
  
  \label{tab:replica_mapping}
\end{table*}

\vspace{-10pt}

\subsection{Gradient-based Localization by Warping}
We track the camera frame by frame using GL by warping. For group $\mathcal{G}_m$, we take the last five frames from the previous group (i.e., $\{ N_0 + (m-1)N + i \}_{i=-4}^0$, denoted as $\mathcal{T}$) for tracking reference, and randomly sample $P$ pixels from frames $\mathcal{T}$. For a sampled pixel $\mathsf{x}$ with RGB value $\mathbf{I}_\mathsf{x}^g$, we render its depth ${D}(\mathsf{x})$ with the estimated camera pose of its associated frame, and project it to 3D space with the depth:
\begin{equation}
    \mathsf{X} = \mathsf{R} \mathsf{K}^{-1} [\textsf{x}, 1]^{\intercal} {D}(\mathsf{x}) + \mathsf{t},
\end{equation}
where $\mathsf{R}$ and $\mathsf{t}$ are the camera poses associated to the pixel $\mathsf{x}$, $\mathsf{K}$ is the known camera intrinsic matrix. In this manner, we form a set of 3D points, say $\mathcal{X}= \{\mathsf{X}_i\}_{i=1}^P$, which will be used as a reference for tracking all frames in $\mathcal{G}_m$.

For frame $i \in \mathcal{G}_m$, we first get an initial estimation of the camera pose with the constant velocity assumption, using the estimated camera poses $[{\mathsf{R}}_{i-2}, {\mathsf{t}}_{i-2}]$ and $[{\mathsf{R}}_{i-1}, {\mathsf{t}}_{i-1}]$, to get $[{\mathsf{R}}_i, {\mathsf{t}}_i]$. Then we optimize the camera pose using GL via warping. In each optimization step, we project all points in $\mathcal{X}$ to the image space of frame $i$:
\begin{equation}
    \hat{\mathsf{x}} \sim \mathsf{K}{\mathsf{R}}_i^{\intercal} (\mathsf{X} - {\mathsf{t}}_i).
\end{equation}
All the projected 2D points form a set ${\hat{\mathcal{X}}=\{\hat{\mathsf{x}}|\mathsf{X}\in\mathcal{X}\}}$, in which we filter out the points outside of the image boundaries to get $\hat{\mathcal{X}}_b \subseteq \hat{\mathcal{X}}$.

Finally, $L_1$ loss can be calculated between the RGB value $\mathbf{I}_\mathsf{x}^g$ at original location $\mathsf{x}$ and bilinearly interpolated RGB value at projected location ${\hat{\mathsf{x}}}$ on current frame $i$, denoted as $\textbf{I}_{\hat{\mathsf{x}}}^i(\cdot)$, which is a function of camera pose $[{\mathsf{R}}_i, {\mathsf{t}}_i]$. Considering all points in $\hat{\mathcal{X}}_b$, we define the tracking loss as:
\begin{equation}
    \mathcal{L}_{\text{track}} = \sum_{\hat{\textsf{x}} \in \hat{\mathcal{X}}_b} {\lVert \textbf{I}_{\textsf{x}}^g - \textbf{I}_{\hat{\mathsf{x}}}^i(\mathsf{R}_i,\mathsf{t}_i) \rVert}_1,
\end{equation}
Note that minimizing $\mathcal{L}_\text{track}$ improves the camera pose parameters. During this process, the functions $\phi_o(\cdot)$,$\phi_c(\cdot)$, used during the volumetric process, are not involved. This omission leads to a significant increase in overall speed, as the VR process is computationally expensive.

\subsection{Bundle Adjustment}

After optimizing the estimated camera poses of all frames in group $\mathcal{G}_m$ through GL, we conduct BA to jointly optimize these poses and the neural scene representations.

\boldparagraph{Global Keyframe Set.}
We maintain a global keyframe set to ensure consistent scene reconstruction, which is defined for frame group $\mathcal{G}_m$ as
\begin{equation}
    \mathcal{H}_m = \{1, 1+H, 1+2H, ... \} \subseteq \bigcup_{i=0}^{m-1} \mathcal{G}_i,
\end{equation}
where $H$ sets the keyframe interval. For the computational efficiency, we select a subset $\hat{\mathcal{H}}_m$ based on the overlapping ratio $r$, which measures reprojected pixel alignment between a keyframe $h \in \mathcal{H}_m$ and the last frame in $\mathcal{G}_m$. Only frames with $r \geq R$ will be randomly selected to reach a total number of $L$, where $R$ is a predefined threshold.

\boldparagraph{Optimization.}
The whole frame set used for BA is $\mathcal{B}_m = \mathcal{G}_m \cup \hat{\mathcal{H}}_m$. We optimize camera poses in $\mathcal{G}_m$ and feature grids in BA, and fix the camera poses in $\hat{\mathcal{H}}_m$. 

For frame $i \in \mathcal{B}_m$, we randomly sample $Q$ pixels to get pixel set $\mathcal{Q}_i$. The photometric rendering loss is: 
\begin{equation}
\label{eq:ba_rgb}
    \mathcal{L}_{\text{rgb}} = 
    \sum_{i \in \mathcal{B}_m} 
    \sum_{\mathsf{p} \in \mathcal{Q}_i} 
    \lVert 
    \mathbf{I}_{\mathsf{p}} 
    - 
    \hat{\mathbf{I}}_{\mathsf{p}} \left( \mathsf{R}_i,\mathsf{t}_i, \mathcal{F} \right) 
    {\rVert}_1,
\end{equation}
where $\mathbf{I}_{\mathsf{p}}$ and $\hat{\mathbf{I}}_{\mathsf{p}}(\cdot)$ are the ground truth and rendered RGB values at pixel $\mathsf{p}$, $[{\mathsf{R}}_i, {\mathsf{t}}_i]$ represents the camera pose, and the set of features $\mathcal{F}$ represents $\phi_f(\cdot)$.

Following DIM-SLAM \cite{li2023dense}, we use patch-based warping for geometric consistency. For each pixel $\mathsf{p} \in \mathcal{Q}_i$ in frame $i$, we generate patches $\mathsf{P}_z$ of size $z \times z$, with $z$ from set $Z = \{1, 7, 11\}$, and project them to 3D space and reproject back to frames in $\mathcal{B}_m$ excluding frame $i$. Patches with less than 5 valid reprojections are removed. The patch-based warping loss is defined as:
\begin{equation}
\label{eq:ba_warping}
    \mathcal{L}_{\text{warping}} = \sum_{i \in \mathcal{B}_m} \sum_{\mathsf{p} \in \mathcal{Q}_i} \sum_{z \in Z} \alpha_z \mathcal{L}_{\text{SSIM}} (\mathsf{P}_z, i),
\end{equation}
where $\alpha_z$ is the loss scalar for patch size $z$. The structure similarity loss \cite{wang2004image} function $\mathcal{L}_{\text{SSIM}}$ takes a patch and its associated frame id as input, defined as:
\begin{equation}
\label{eq:ba_ssim}
    \mathcal{L}_{\text{SSIM}} (\mathsf{P}, i) = \sum_{j \in \mathcal{B}_m, j \neq i} \text{SSIM}(\mathbf{I}_{\mathsf{P}}, \hat{\mathbf{I}}_{\mathsf{P}}(\mathsf{R}_i,\mathsf{t}_i, \mathsf{R}_j,\mathsf{t}_j, \mathcal{F})),
\end{equation}
where $\mathbf{I}_{\mathsf{P}}$ is the GT RGB values for patch $\mathsf{P}$, and $\hat{\mathbf{I}}_{\mathsf{P}}$ is the reprojected patch $\mathsf{P}$ from frame $i$ to frame $j$. 

Finally, the total loss for BA is the scaled sum:
\begin{equation}
    \mathcal{L}_{\text{BA}} = \alpha_{\text{rgb}} \mathcal{L}_{\text{rgb}} + \alpha_{\text{warping}} \mathcal{L}_{\text{warping}},
\end{equation}
where $\alpha_{\text{rgb}}$ and $\alpha_{\text{warping}}$ are scalars.

\section{Experiments}
\label{sec:experiments}

\begin{table}[!t]
\vspace{3pt}
\caption{\textbf{Ablation Study.} `TT' and `HO' denote ternary-type opacity and hybrid odometry respectively. Results obtained on Replica~\cite{straub2019replica} \texttt{office-0}.}
\begin{adjustbox}{max width=\linewidth}
  \centering
  \begin{tabular}{@{}lccc@{}}
    \toprule
    
    & ATE RMSE [cm] $\downarrow$ & Depth L1 [cm] $\downarrow$ & PSNR [dB] $\uparrow$ \\
      
    \midrule

    w/o TT, HO & 70.00 & 47.95 & 16.57 \\
    w/o HO & 40.95 & 55.62 & 19.59\\
    w/o TT & 2.11 & 5.20 & 29.27\\
    Ours & \textbf{0.59} & \textbf{3.20} & \textbf{30.34} \\
    
    \bottomrule
  \end{tabular}
\end{adjustbox}

  \label{tab:abalation}
\end{table}

\subsection{Implementation Details}

We employ scene feature grids across 7 hierarchies, each with three color and one opacity channel, decoded by two MLPs with three hidden layers, where the initial layers have 32 dimensions with ReLU activation, and the final employs Sigmoid activation with temperature parameters $\tau$. For SLAM initialization, we use 15 frames, training over 1500 iterations, initially applying L1 loss for depth, then incorporating $\mathcal{L}_{\text{warping}}$ and finally $\mathcal{L}_{\text{rgb}}$, adjusting camera poses after 150 iterations. The challenging \texttt{kitchen} scene from 7-Scenes necessitates a group size $N=1$, unlike others set at $N=10$. For GL, we sample $P=10000$ pixels in the tracking frame set $\mathcal{T}$ and optimize the camera poses for 200 iterations. For BA, we take the global keyframe frequency $H=5$, we select $L=10$ frames from the global keyframe set with the overlapping threshold $R=0.1$, and run for 300 iterations. Our system, tested on an NVIDIA A100 GPU, averages 6ms per iteration for Global Localization (GL) and 145ms for Bundle Adjustment (BA), with learning rates tailored for different stages and components, and optimal sigmoid temperatures for opacity and RGB found to be $\tau_1=10$ and $\tau_2=10$ after extensive testing.

\subsection{Datasets}

\label{subsection:dataset}
We evaluate our method on synthetic dataset Replica~\cite{straub2019replica} and real-world dataset 7-Scenes~\cite{shotton2013scene}. We use 7 layers of multi-resolution feature grids, similar to \cite{li2023dense}. For Replica, we set the multi-resolution voxel sizes to $\{0.64, 0.48, 0.32, 0.24, 0.16, 0.12, 0.08 \}$m, and we sample $Q=3000$ pixels in BA. For more challenging dataset 7-Scenes, we set a finer multi-resolution feature grids with sizes $\{0.48, 0.32, 0.24, 0.16, 0.12, 0.08, 0.04 \}$m to capture more details. We sample $Q=5000$ pixels in BA, as using more pixels yields marginal accuracy improvements at an increasing computational cost.

\subsection{Metrics}
We assess tracking accuracy using the RMSE of Absolute Trajectory Error (ATE RMSE), after aligning and scaling trajectories to ground truth by ~\cite{grupp2017evo}. Mapping quality is evaluated on Replica \cite{straub2019replica} dataset for both geometric and photometric aspects. In line with NeRF-SLAM \cite{rosinol2022nerf}'s approach, we render depth and RGB for all frames using estimated camera poses post-training. For the geometric evaluation, we rescale the estimated depth, filter out outlier pixels with a depth beyond 1m compared from GT, and calculate mean L1 loss. For the photometric evaluation, we measure PSNR between rendered and GT RGB images. The final reported L1 loss and PSNR are averaged across all frames.

\begin{figure}[t]
  \centering
  \vspace{5pt}
   \includegraphics[width=\linewidth]{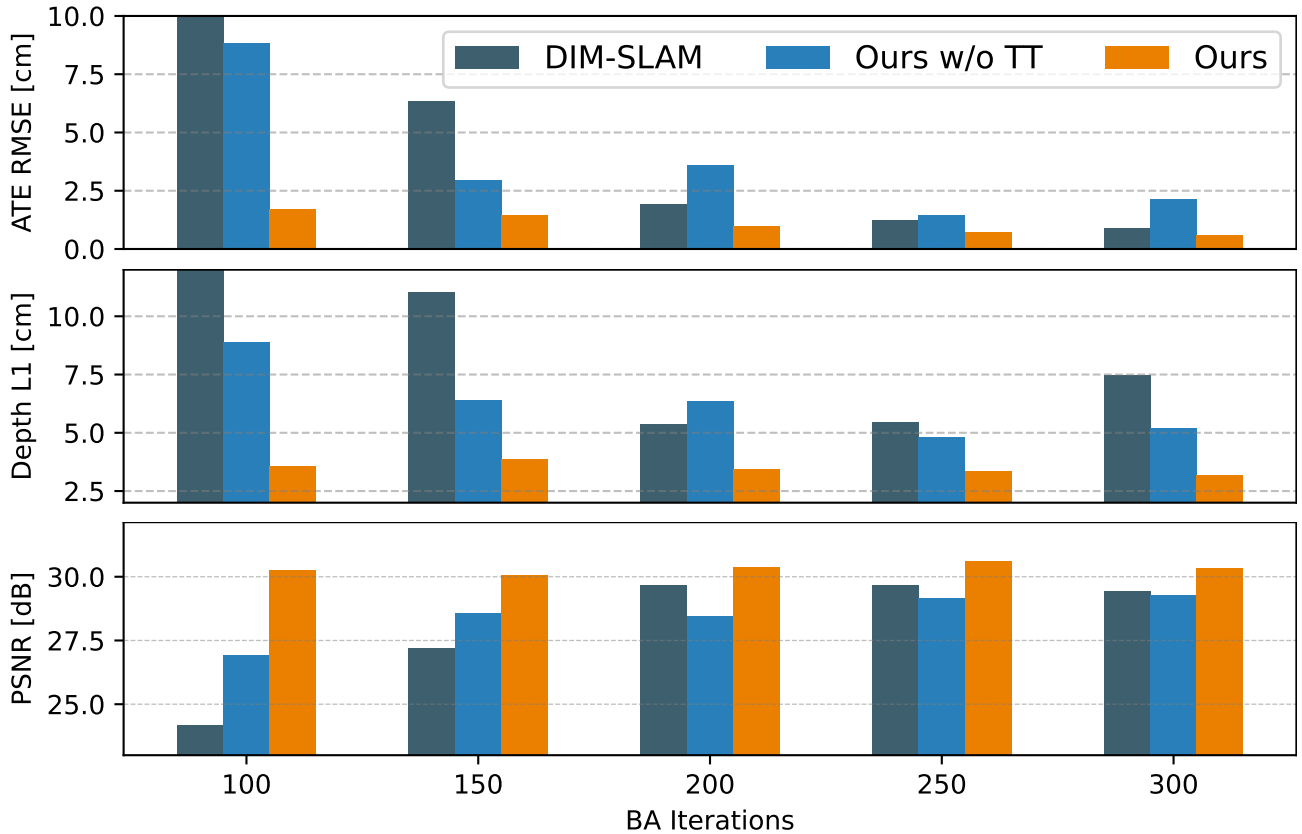}
   \caption{\textbf{Ablation Study.} 
   Our evaluation on Replica~\cite{straub2019replica} \texttt{office-0} with varying BA iterations shows that, unlike DIM-SLAM~\cite{li2023dense} which struggles with fewer BA iterations, our method maintains performance even with reduced iterations. This suggests the ternary-type (TT) opacity contributes to map training convergence speed and system robustness. 
   }
   \label{fig:ablation_study_sigmoid_expression}
\end{figure}

\subsection{Quantitative and Qualitative Results}
We present both quantitative and qualitative comparisons of our method against others. For the DIM-SLAM baseline, we re-implement the code and standardize the hyper-parameters to ours. As shown in TABLE \ref{tab:tracking_results}, our method surpasses DIM-SLAM in tracking across most scenes in Replica~\cite{straub2019replica} and 7-Scenes~\cite{shotton2013scene} datasets. In TABLE \ref{tab:replica_mapping}, our method excels in all scenes in both color and depth mapping. Visual evidence in Fig. \ref{fig:teaser} (left) demonstrates accurate rendered RGB image and depth map with minimum artifacts. Furthermore, Fig. \ref{fig:teaser} (right) shows that our method is approximately 6x faster than DIM-SLAM, while achieving lower tracking and mapping errors.

\subsection{Ablation Study}

\boldparagraph{Effectiveness of Ternary-Type Opacity.} TABLE \ref{tab:abalation} showcases the enhanced performance in both mapping and tracking with TT. Furthermore, in Fig. \ref{fig:ablation_study_sigmoid_expression}, we illustrate that our TT enables a reduction in the number of iterations in BA, with a minimal decrease in performance compared to DIM-SLAM. Furthermore, at the end of the training, we randomly sample a ray and plot the decoded opacity and calculated weights along the ray, comparing scenarios with or without TT, as shown in Fig. \ref{fig:along_ray}. With TT, the weights are more concentrated near the depth with a higher peak. In contrast, without TT, the weights are less concentrated with an apparent non-zero region before the depth, and have a lower peak. The right part of Fig. \ref{fig:along_ray} further illustrates that achieving low weights before the depth and high weights around the depth is facilitated by maintaining 0-1 opacity along the ray.

\boldparagraph{Effectiveness of Hybrid Odometry.} With our TT, we compare the tracking accuracy with and without using HO, specifically employing camera poses derived from the constant velocity model without further optimization. As indicated in TABLE \ref{tab:abalation}, using HO is crucial for achieving favorable mapping and tracking results.
\section{Conclusion}
\label{sec:conclusion}
We present a method for RGB-only NeRF-SLAM that benefits from the prior of the opaque scenes, achieved by the TT modelling of the 3D scenes.
Furthermore, we propose a hybrid method to estimate the camera motion that leads to a significant increase in overall speed. The theoretical insights that we provide in analyzing the VR and opaque surfaces, in our context, are well supported by our experimental results. In fact, the reported observation has led us to propose a simple yet very effective strategy to exploit the opaque surface prior, which in turn offers us  both improved accuracy and speed, thanks to the faster convergence offered by the proposed TT prior.

\boldparagraph{Limitations and Future Work.} While being online, the proposed method is not yet real-time on the consumer devices for many common applications. These requirements may be addressed by application- and hardware-specific code optimization and the system configurations, which remains as the future works.





{
    \bibliographystyle{IEEEtran}
    \bibliography{main}
}

\end{document}